\documentclass{article}
\usepackage{iclr_aims,times}

\usepackage{xcolor}
\usepackage{caption}
\usepackage{bbm}
\usepackage{amsmath}
\usepackage{comment}

\usepackage{graphicx}
\graphicspath{{images/}}
\usepackage{amsmath,amssymb,amsthm}

\theoremstyle{plain}
\newtheorem{proposition}{Proposition}

\usepackage{subcaption}
\usepackage{amsmath,amssymb,amsthm}
\usepackage{enumitem}

\usepackage[ruled,vlined]{algorithm2e}

\usepackage{tikz}
\usetikzlibrary{positioning,arrows.meta}

\usepackage{hyperref}
\usepackage{url}

\usepackage{dblfloatfix}
\usepackage{float}
\usepackage[section]{placeins}

\newcommand{\fsingle}{\mathrm{F}_1^{\text{single}}}
\newcommand{\funion}{\mathrm{F}_1^{\text{union}}}

\SetCommentSty{mycommfont}
\SetKwComment{tcp}{$\triangleright$\ }{ }

\title{Visualizing Coalition Formation: \\ From Hedonic Games to Image Segmentation}

\author{Pedro H. de Paula Fran\c{c}a \\
  UFRJ \\
  \texttt{pedro.franca@ufrj.br} 
  \And
  Lucas Lopes Felipe \\
  UFRJ \\
  \texttt{lucaslopesf2@gmail.com} 
  \And
  Daniel Sadoc Menasché \\
  UFRJ \\
  \texttt{sadoc@ic.ufrj.br}
}

\iclrfinalcopy
\begin{document}
\raggedbottom
\maketitle

\begin{abstract}
We propose image segmentation as a visual diagnostic testbed for coalition formation in hedonic games. Modeling pixels as agents on a graph, we study how a granularization parameter shapes equilibrium fragmentation and boundary structure.
On the Weizmann single-object benchmark, we relate multi-coalition equilibria to binary protocols by measuring whether the converged coalitions overlap with a foreground ground-truth. We observe transitions from cohesive to fragmented yet recoverable equilibria, and finally to intrinsic failure under excessive fragmentation.
Our core contribution links multi-agent systems with image segmentation by quantifying the impact of mechanism design parameters on equilibrium structures.
\end{abstract}

\section{Introduction}
\label{sec:intro}

\begin{figure}[b]
\centering
\setlength{\tabcolsep}{1pt}
\begin{tabular}{cccccc}
\includegraphics[width=0.3\textwidth,height=0.12\textheight,keepaspectratio]{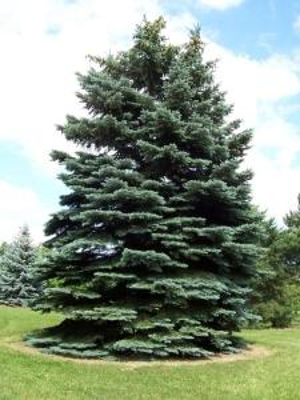} &
\includegraphics[width=0.3\linewidth,height=0.12\textheight,keepaspectratio]{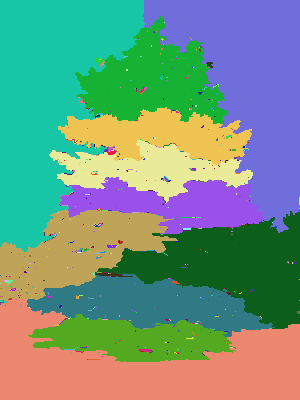} &
\includegraphics[width=0.3\linewidth,height=0.12\textheight,keepaspectratio]{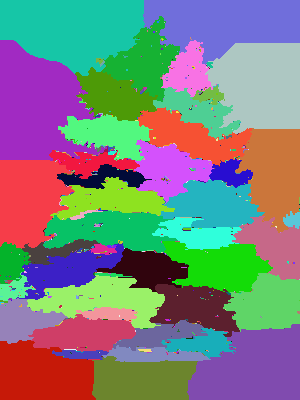} &
\includegraphics[width=0.3\linewidth,height=0.12\textheight,keepaspectratio]{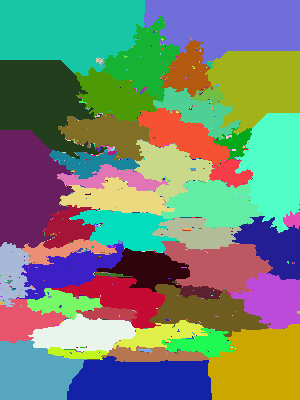} &
\includegraphics[width=0.3\linewidth,height=0.12\textheight,keepaspectratio]{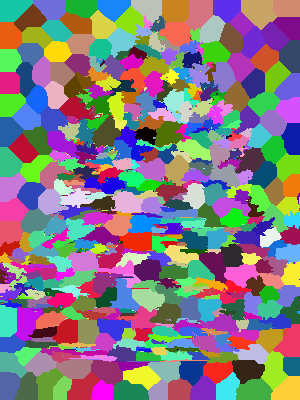} &
\includegraphics[width=0.3\linewidth,height=0.12\textheight,keepaspectratio]{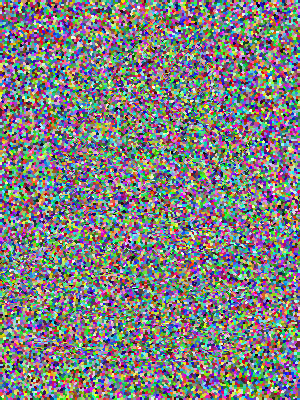} \\
(a) & (b) & (c) & (d) & (e) & (f)
\end{tabular}
\caption{Segmentation of the same image for increasing values of $\gamma$.
(a) Original image.
(b) $\gamma=10^{-6}$.
(c) $\gamma=8\times10^{-6}$.
(d) $\gamma=10^{-5}$.
(e) $\gamma=5\times10^{-4}$.
(f) $\gamma=2.5\times10^{-1}$.}% Small $\gamma$  yields few large coalitions, while larger $\gamma$ produces   more fragmented partitions.}
\label{fig:resolution_sequence}
\end{figure}

Coalition formation is a common emergent behavior in multi-agent systems and mechanism design: agents optimize
individual utilities under interaction constraints, and equilibria appear as partitions of the
population~\citep{bogomolnaia2002stability}. %\footnote{Such as: precise delineation of organs and tumors in medical imaging; monitoring land cover change and urban growth in environmental and earth sciences; distinguishing microscopic structures and phases in materials science; mapping anatomical regions in neuroscience brain scans; and enabling real-time object detection for safe navigation in autonomous vehicles and robotics.}
We leverage intuitive image segmentation to interpret a multi-agent coalition mechanism, providing a quantitative measure of a resolution parameter that controls coalition granularity. 
Image segmentation is an essential task across a range of scientific and technological domains~\citep{mittal2022comprehensive}.

The \emph{resolution} parameter $\gamma$ modulates coalition granularity~\citep{traag2011cpm}, fundamentally reshaping the equilibrium structure of the system~\citep{felipe2025leiden}.
By adjusting $\gamma$, the system can span the full spectrum of partitions, ranging from a single grand coalition as $\gamma \to 0$ to a set of individual singletons as $\gamma \to 1$. 
Figure~\ref{fig:resolution_sequence} 
illustrates how image segmentation visualizes this transition from smaller    to larger values of $\gamma$ (Fig.~\ref{fig:resolution_sequence}(b) to~Fig.~\ref{fig:resolution_sequence}(f)), mapping different partition equilibria to varying levels of spatial granularity.  %Small $\gamma$ (Fig.~\ref{fig:resolution_sequence}(b)) yields few large coalitions, while larger $\gamma$ produces   more fragmented partitions with finer boundaries. %  Starting from Fig.~\ref{fig:resolution_sequence}(a), the regimes  with  $\gamma=8 \times 10^{-6}$  and  $\gamma=2.5 \times 10^{-1}$ are illustrated in Figs.~\ref{fig:resolution_sequence}(b) and.  
Indeed, a central challenge for mechanism designers is identifying the specific value of $\gamma$ that yields meaningful equilibria for a given real-world application.

\begin{figure}[t]
\centering
\setlength{\tabcolsep}{2pt}
\begin{tabular}{ccccc}

\includegraphics[width=0.15\textwidth]{100_0497.png} &
\includegraphics[width=0.15\textwidth,trim=45 45 45 45,clip]{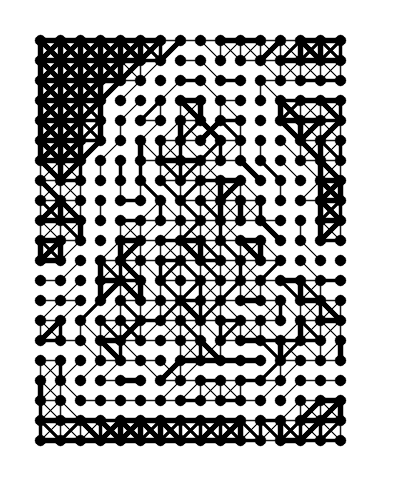} &
\includegraphics[width=0.15\textwidth,trim=45 45 45 45,clip]{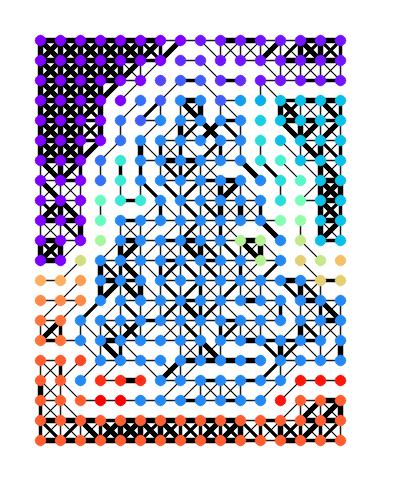} &
\includegraphics[width=0.15\textwidth]{100_0497_segmented_0.000001.png} &
\includegraphics[width=0.15\textwidth]{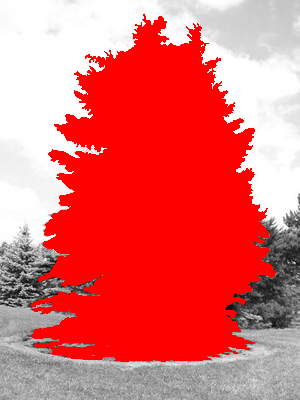} \\

(a) & (b) & (c) & (d) & (e)

\end{tabular}
\caption{Pipeline:
(a) image $\rightarrow$
(b) graph construction (downsampled,\textsuperscript{1} see Appendix~\ref{sec:graph}) $\rightarrow$
(c) equilibrium partition (Section~\ref{sec:mechanism}) $\rightarrow$
(d) segmented image $\rightarrow$
(e) ground truth evaluation (Section~\ref{sec:results}).}
\vspace{-0.1in}
\label{fig:pipeline_figs}
\end{figure}

We bridge image segmentation and the proposed coalition mechanism by representing the image as a graph where pixels act as nodes.
In this framework, the primary distinction between methods lies in the edge construction, which encodes pairwise similarities according to specific schemes~\citep{shi1997normalized, galun2003texture, AlpertGBB07}.
This formulation effectively casts the segmentation task as a network clustering or community detection problem~\citep{avrachenkov2018network}.

  Our key  contribution lies in using the pipeline in Figure~\ref{fig:pipeline_figs} to bridge the gap between   multi-agent systems and intuitive image segmentation tasks,  so as to quantify the impact of the resolution parameter on equilibrium structures. 
%Figure~\ref{fig:pipeline_figs} illustrates the stages of our processing pipeline. 
At the core of the pipeline lies the coalition mechanism (Figure~\ref{fig:pipeline_figs}(c)), which determines the specific cluster assignment for each pixel; these assignments then serve as labels to reconstruct the segmented image (Figure~\ref{fig:pipeline_figs}(d)). In Section~\ref{sec:mechanism}, we detail the mechanism based on hedonic games~\citep{felipe2025leiden}. Section~\ref{sec:results} then evaluates the overlap between the equilibrium partitions produced by the mechanism and the ground-truth foreground coalition (Figure~\ref{fig:pipeline_figs}(e)).
Section~\ref{sec:conclusion} concludes with a summary of our contributions and future work.\footnote{
We detail our graph construction in Appendix~\ref{sec:graph}. While it is impactful for performance, our mechanism is agnostic to the specific construction method as long as the input is a weighted graph; thus, evaluating alternatives remains future work.   Figs.~\ref{fig:pipeline_figs}(b) and~\ref{fig:pipeline_figs}(c) are downsampled versions  of Fig.~\ref{fig:pipeline_figs}(a)  for visual clarity.}

\section{Hedonic Mechanism for Coalition Formation}
\label{sec:mechanism}

A hedonic game is a naturally decentralized coalition formation game in which an agent's preference depends strictly on the composition of its own coalition~\citep{aziz2016hedonic}.
A potential game allows unilateral improvements by individual agents to be aligned with the maximization of a global potential function, where local dynamics can be interpreted as hill-climbing a single objective \citep{monderer1996potential}.

\cite{felipe2025leiden} modeled the Constant Potts Model (CPM) \citep{traag2011cpm}, a quality function known for being a \emph{resolution-limit-free} method~\citep{fortunato2007resolution}, as an \emph{additively separable potential hedonic game}. The CPM is decomposed into individual agent utilities (hedonic potentials). This frames the mechanism as a non-cooperative game while remaining aligned with a global quality function, ensuring that selfish agents' decisions yield stable, cooperative coalitions.

\textbf{Potential. }
\label{subsec:potential}
Each player seeks to belong to the community that maximizes its utility, in which better balances attraction to well-connected neighbors against a penalty for
joining large, weakly connected coalitions.
For a node $v$ and community  $C$, we define
\begin{equation}
\mathrm{Potential}_{v}^{\gamma}(C)
= (1-\gamma)\, d(v,C)
- \gamma\, \overline{d}(v,C)
\label{eq:potencial_node}
\end{equation}
where $d(v,C)$ denotes the degree of $v$ in community $C$, while $\overline{d}(v,C)$ is the number of non-neighbors in community $C$, therefore $|C| = 1+ d(v,C) + \overline{d}(v,C)$ is the number of nodes in $C$.
The parameter $\gamma \in [0,1]$ controls the trade-off between cohesion and coalition size:
small $\gamma$ favors larger cohesive regions, while larger $\gamma$ penalizes large communities and promotes fragmentation into smaller coalitions (Figure~\ref{fig:resolution_sequence}).

\begin{algorithm}[t]
\footnotesize
\DontPrintSemicolon
\LinesNumbered
\SetKwInOut{Input}{Input}\SetKwInOut{Output}{Output}

\Input{Weighted Graph $G=(V,E,w)$, resolution $\gamma$, partition $\pi^{(0)}$}
\Output{Equilibrium partition $\pi=\{C_1,\dots,C_K\}$}

\If{$\pi^{(0)}$ not provided}{
    Assign each node to its own community \label{line:init-singleton} \tcp*[r]{Initialize a singleton partition}
}
\textbf{change} $\leftarrow$ \texttt{true} \label{line:change-true} \tcp*[r]{Activate the main improvement loop}

\While{\textbf{change}}{ \label{line:while} \tcp*[r]{Iterate while at least one beneficial node move occurs}
    \textbf{change} $\leftarrow$ \texttt{false} \label{line:reset-change} \tcp*[r]{Clear the change flag}
    
    \ForEach{node $v \in V$}{ \label{line:for-each-node} \tcp*[r]{Scan all nodes (pixels)}
        $\sigma_v \leftarrow$ current community of $v$ \label{line:current-community} \tcp*[r]{Retrieve current community $\sigma_v$}
        
        $\sigma^\star \leftarrow \arg\max_C \mathrm{Potential}_{v}^{\gamma}(C)$ \label{line:best-community} \tcp*[r]{get community maximizing potential}
        
        \If{$\mathrm{Potential}_{v}^{\gamma}(\sigma^\star) > \mathrm{Potential}_{v}^{\gamma}(\sigma_v)$}{ \label{line:if-better} \tcp*[r]{Check for strict potential improvement}
            
            Move $v$ to $\sigma^\star$ \label{line:move} \tcp*[r]{Reassign the node's community}
            
            \textbf{change} $\leftarrow$ \texttt{true} \label{line:set-change} \tcp*[r]{Mark change occurred to force another pass}
        }
    }
}
\caption{Hedonic optimization: The algorithm terminates at a locally stable equilibrium.}
\label{alg:hedonic_pixels}
\end{algorithm}

\textbf{Equilibrium. }
\label{sec:equilibrium}
An \emph{equilibrium} is a stable partition $\pi=\{C_1,\dots,C_K\}$ where no agent has an incentive to change their community. This state is reached when every node is \emph{locally stable}, meaning its current community $C_{\sigma_v}$ provides a potential greater than or equal to any other community $C_k$:
\begin{equation}
    \text{Potential}_{v}^{\gamma}(C_{\sigma_v}) \geq \text{Potential}_{v}^{\gamma}(C_k) \quad \forall k \neq \sigma_v.
    \label{eq:potential_greater}
\end{equation}
A partition is considered a solution to the community detection problem if it satisfies two conditions:
\begin{itemize}
    \item \textbf{Internal Stability:} No node wants to leave its current community.
    \item \textbf{External Stability:} No node wants to join a different existing community.
\end{itemize}
In this state, each node $v$ is assigned to a community $\sigma_v$ that maximizes its individual utility $\sigma_{v} \in \arg\max_k \text{Potential}_{v}^{\gamma}(C_k)$.
Movement only occurs if a change offers a \textbf{strictly higher} utility. If multiple communities provide the same maximum utility, the node remains in its current community.

%\textbf{Hedonic Optimization. }
\label{subsec:optimization}
%
%
% We leverage the Leiden algorithm~\citep{traag2019louvain2leiden}.
Algorithm~\ref{alg:hedonic_pixels} outlines the hedonic game mechanism,\footnote{
Algorithm~\ref{alg:hedonic_pixels} presents a simplified version of the optimization process to illustrate the underlying mechanics. In practice, scanning all nodes (Line~\ref{line:for-each-node}) results in many ``ignorable checks'' that do not yield strict improvements. To achieve high performance, we utilize the
% \texttt{hedonic} Python library~\citep{felipe2025leiden} (\url{https://github.com/lucaslopes/hedonic}),
\texttt{community\_leiden} method~\citep{traag2019louvain2leiden} (of the \texttt{igraph} Python library) optimizing the CPM~\citep{traag2011cpm},
which serves as the core of our pipeline.}
which relies on a potential function bounded by $2|V|^2$. Given a rational resolution parameter $\gamma = b/\kappa$, every improving move increases the potential by at least $1/\kappa$, guaranteeing convergence in $O(\kappa \cdot |V|^2)$ steps~\citep{felipe2025leiden}. % This result is pseudo-polynomial with respect to $\log c$ when $c$ is provided as part of the input~\citep{felipe2025leiden}.
%however, for any fixed resolution where $c$ is a constant, the algorithm converges in $O(|V|^2)$ steps.

\section{Results}
\label{sec:results}

To quantify how $\gamma$ shapes equilibrium partitions and yields meaningful segmentations, we evaluate the converged partitions using two post-hoc accuracy metrics inspired by~\cite{AlpertGBB07}.  For each image, we select one  binary ground-truth (GT),     denoted by $Y$, where 
$Y\in\{0,1\}^{H\times W}$:
\begin{itemize}
    \item \textbf{$\fsingle$ (Dominant-Coalition Accuracy):} Evaluates  the $F_1$ score of the \emph{single} community $C_k \in \pi$ that maximizes the $F_1$ score relative to $Y$; %the binary ground-truth (GT) mask $Y\in\{0,1\}^{H\times W}$.
    \item \textbf{$\funion$ (Recoverable-Union Accuracy):} Evaluates the  $F_1$ score of the \emph{subset} of communities $S \subseteq \{ C_1, \ldots, C_K \}$ whose union maximizes the $F_1$ score relative to $Y$. %the binary ground-truth (GT) mask $Y\in\{0,1\}^{H\times W}$.
\end{itemize}
 $F_1$ is the harmonic mean of precision and recall~\citep{christen2023review}. Specifically, $\fsingle$ measures the emergence of a cohesive foreground, while $\funion$ assesses whether the foreground object, even if fragmented, remains ``recoverable'' from the equilibrium structure. Formal definitions, algorithmic details, and additional properties of $\fsingle$ and $\funion$ are provided in Appendix~\ref{app:f1}.

We use the \textit{Weizmann Segmentation Evaluation Database}~\citep{AlpertGBB07} to obtain images with known ground truth, which is used exclusively at evaluation time. Note that we restrict our analysis to the single-object subset (100 natural images, 3 human ground-truth masks per image) and leave the two-object subset for future work.
Experiments can be reproduced at: \url{https://github.com/henriquepedro1991/hedonic-games-to-image-segmentation}.

% We focus on a recoverable greedy union diagnostic. Alternative oracle unions are possible, but a systematic comparison is left for future work.

\textbf{Resolution Design. }
\label{subsec:resolution}
 We extend the approach of \cite{avrachenkov2018network}---setting the resolution as the graph's edge density---by proposing a scaling approach to evaluate varying density ratios:
$$
\gamma = \frac{\text{density}(G)}{c}, \qquad \text{density}(G)=\frac{2|E|}{|V|(|V|-1)},
$$
where $c$ is a fixed constant. This normalization allows $\gamma$ to act as a consistent local decision threshold across graphs of varying sparsity. 

Figure~\ref{fig:results_main}   highlights the key insights from our experiments (see    Appendix~\ref{sec:report} for additional details).  
Our numerical results identified $c=900$ as the optimal value to place most instances in the fragmented-but-recoverable regime (Figure~\ref{fig:results_main}(a)).   
Across all images, we obtain $\mathbb{E}[\funion]\approx 0.828$ (median $\approx 0.868$) and
$\mathbb{E}[\fsingle]\approx 0.488$, with an average gap
$\mathbb{E}[\funion-\fsingle]\approx 0.340$ (cf.  Figure~\ref{fig:results_main}(b)). 
The significant average performance gap   suggests that many apparent segmentation failures are actually fragmented, yet entirely recoverable, equilibria.  Indeed, the partition produced by the hedonic mechanism often contains the object, but distributed across several coalitions.

% Again, this large systematic gap shows that many apparent failures under a dominant-coalition criterion  arise from the difficulty of recomposing fragmented equilibria into a single binary mask.

\begin{figure}[t]
\centering
\setlength{\tabcolsep}{2pt}
\begin{tabular}{ccccc}
\includegraphics[width=0.3\textwidth,height=0.12\textheight,keepaspectratio]{100_0497.png} &
\includegraphics[width=0.3\textwidth,height=0.12\textheight,keepaspectratio]{100_0497_7.png} &
\includegraphics[width=0.3\linewidth,height=0.12\textheight,keepaspectratio]{100_0497_segmented_0.000001.png} &
\includegraphics[width=0.3\textwidth,height=0.12\textheight,keepaspectratio]{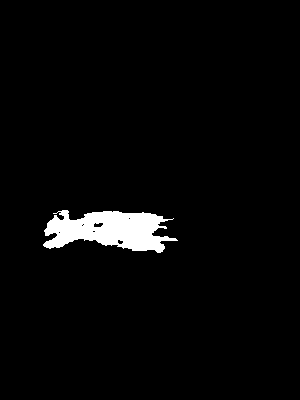} &
\includegraphics[width=0.3\textwidth,height=0.12\textheight,keepaspectratio]{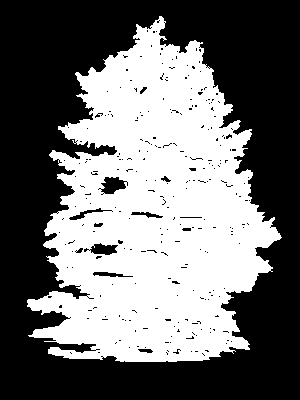} \\
(a) & (b) & (c) & (d) & (e)
\end{tabular}
\caption{Projections from a multi-community partition.
(a) Original image.
(b) Binary ground-truth mask.
(c) Equilibrium partition by hedonic mechanism.
(d) Best single community selected by $\fsingle$ at $\gamma=7.63\times10^{-6}$.
(e) Best subset of communities selected by $\funion$ at $\gamma=2.96\times10^{-5}$.}
\label{fig:binary_projections}
\end{figure}

\begin{figure}[t]
\centering
\setlength{\tabcolsep}{3pt}
\begin{tabular}{cc}
\includegraphics[width=0.5\textwidth,height=0.18\textheight,keepaspectratio]{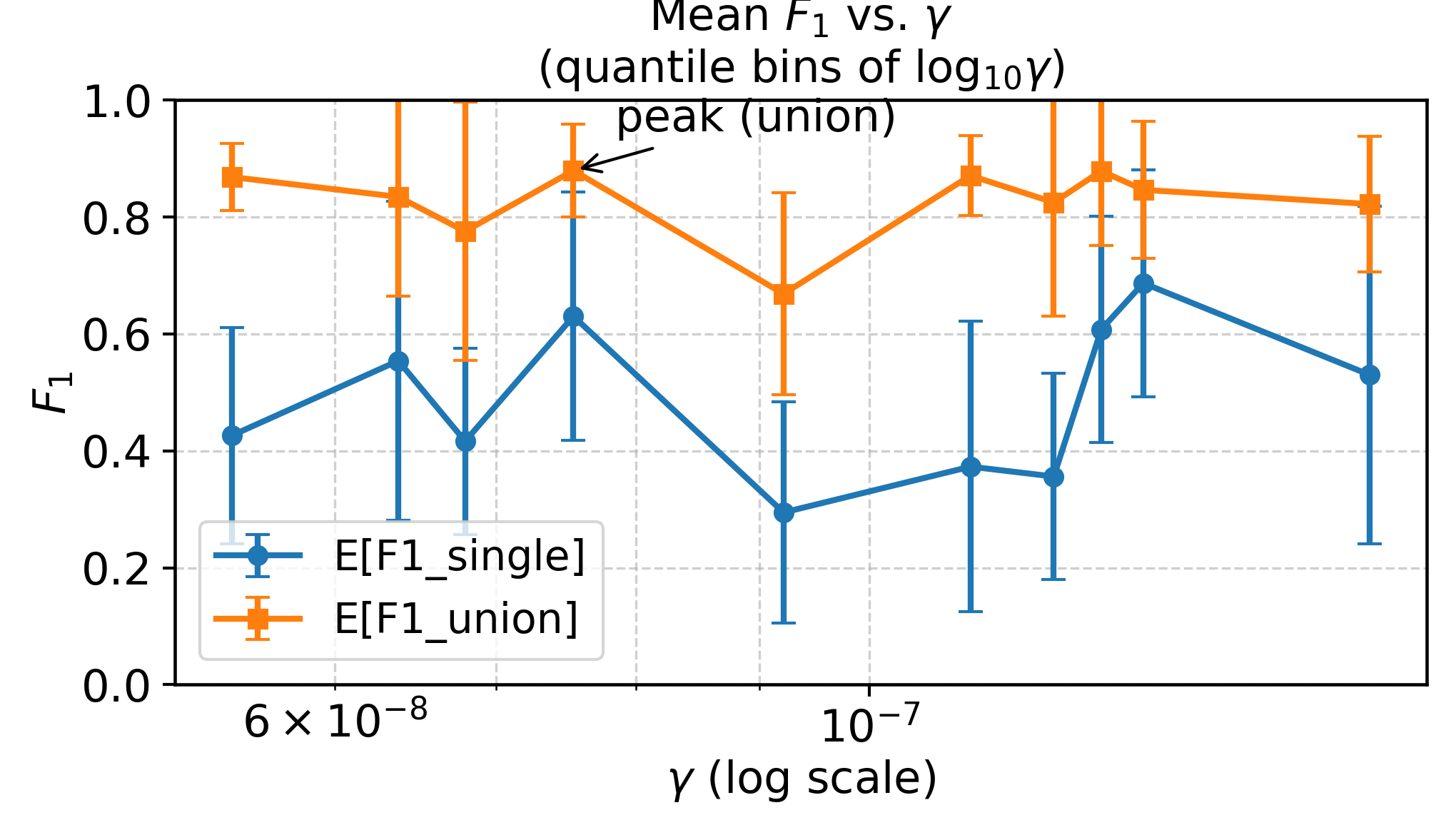} &
\includegraphics[width=0.5\textwidth,height=0.18\textheight,keepaspectratio]{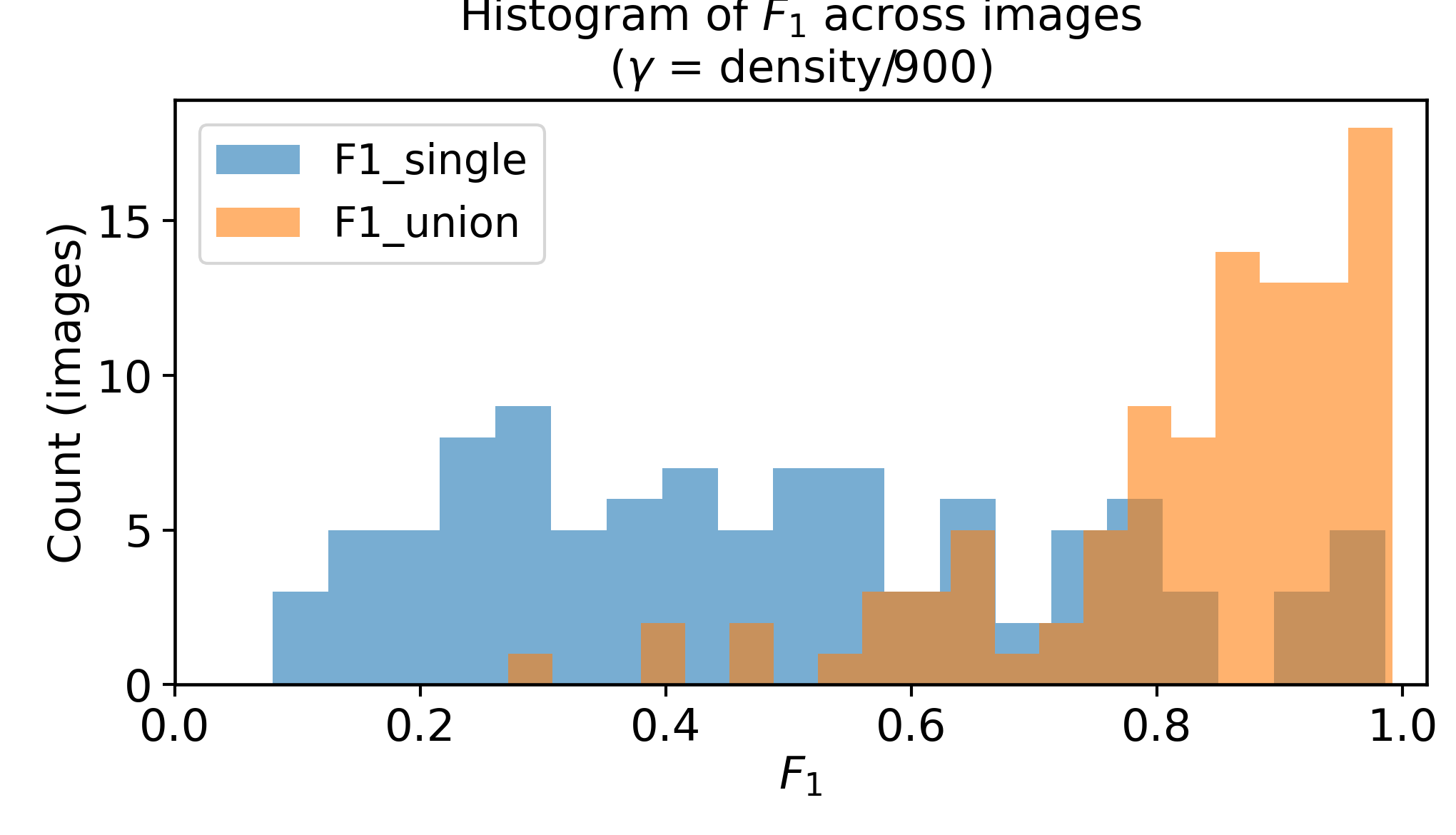} \\ 
(a) & (b) 
\end{tabular} \vspace{-0.15in}
\caption{
(a) Mean $\fsingle$ and $\funion$ as a function of $\gamma$
(induced by $\mathrm{density}(G)/900$).
(b) Global distributions of $\fsingle$ and $\funion$ over 100 images after selecting the best GT
per image.
}
\label{fig:results_main} \vspace{-0.1in}
\end{figure}

\section{Conclusion}
\label{sec:conclusion}

We proposed image segmentation as an interpretable testbed for studying equilibrium in hedonic games. This spatial and visual setting makes equilibrium structures directly \emph{inspectable}, providing an intuitive way to analyze how coalition mechanisms behave. In particular, it highlights how a resolution parameter reshapes equilibrium geometry in a visually grounded domain.

Our results demonstrate that a density-normalized rule, $\gamma=\mathrm{density}(G)/c$, places most instances in favorable regimes by successfully preventing extreme over- or under-fragmentation (Appendix~\ref{app:fragmentation}).
Appendix~\ref{app:peak_decay} reveals that a low $\funion-\fsingle$ gap indicates either mutual success for cohesive structures or intrinsic failure for detailed objects.
Appendix~\ref{app:init} confirms the robustness of the mechanism, demonstrating no difference in equilibrium accuracy whether initializing as a fragmenting grand coalition or as merging isolated nodes.
Finally, Appendix~\ref{app:f1} formalizes the computation of these binary projections and establishes the theoretical recoverability limits of $\funion$.

Future work includes evaluating alternative graph constructions and the Weizmann two-object subset. We also aim to investigate the relationship between accuracy and resolution, focusing on merging fragmented but recoverable coalitions to reduce the gap between $\fsingle$ and $\funion$.

{\footnotesize
\setlength{\bibsep}{0pt}
\bibliographystyle{plainnat}
\bibliography{main}
}

\appendix

% \clearpage
\section{Pixel-Graph Construction}
\label{sec:graph}

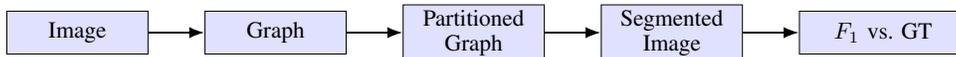
\begin{figure}[b]
\centering
\resizebox{0.92\linewidth}{!}{
\begin{tikzpicture}[
  node distance=0.85cm,
  every node/.style={draw, fill=blue!12, rectangle, minimum height=0.65cm, text width=2.0cm, align=center, inner sep=2pt},
  arrow/.style={-{Latex[scale=1.0]}, thick}
]
  \node (img) {Image};
  \node (g)   [right=of img] {Graph};
  \node (pg)  [right=of g] {Partitioned\\Graph};
  \node (si)  [right=of pg] {Segmented\\Image};
  \node (ev)  [right=of si, text width=2.4cm] {$F_1$ vs.\ GT};

  \draw [arrow] (img) -- (g);
  \draw [arrow] (g) -- (pg);
  \draw [arrow] (pg) -- (si);
  \draw [arrow] (si) -- (ev);
\end{tikzpicture}}
\caption{Diagnostic pipeline: image $\rightarrow$ graph $\rightarrow$ hedonic partition
$\rightarrow$ binary projection and evaluation.}
\label{fig:pipeline}
\end{figure}

Recall that our pipeline is given by Figure~\ref{fig:pipeline}. Next, we   focus on the pixel-graph construction (Image $\rightarrow$ Graph). 
Each image is converted into an undirected weighted pixel graph
$G=(V,E,w)$, where each pixel corresponds to a node and candidate edges
connect spatially adjacent pixels under an 8-neighborhood system,
including horizontal, vertical, and diagonal neighbors.

Rather than using a binary adjacency rule based on exact intensity agreement,
we assign each neighboring pair $(u,v)$ a continuous affinity weight that
combines color similarity and boundary evidence.
Let $I(u)\in[0,255]^3$ denote the RGB vector at pixel $u$, and let
$B:\Omega\to[0,1]$ be a normalized edge map used as a proxy for boundary
probability. For each adjacent pair $(u,v)$, we define
\begin{equation}
w_{uv}
=
\exp\!\left(
-\frac{\|I(u)-I(v)\|_2^2}{\sigma_{\mathrm{color}}^2}
\right)
\exp\!\left(
-\frac{\max\{B(u),B(v)\}}{\sigma_{\mathrm{edge}}^2}
\right).
\label{eq:weights}
\end{equation}

In our implementation, $B$ is obtained from a normalized Canny edge map,
used as a lightweight proxy for contour strength rather than the original
gPb detector. Thus, affinities are high when adjacent pixels have similar
RGB values and low when they lie near strong image boundaries.
Edges with very small affinity are discarded, yielding a sparse local graph.

This construction biases the graph toward forming coalitions inside
locally homogeneous color regions while reducing affinity across strong
visual discontinuities. Consequently, the subsequent community-detection
hedonic game is encouraged to group pixels within coherent regions and to
place coalition boundaries near salient contours.

Although this construction is inspired by pairwise affinity ideas used in
graph-based segmentation, it is not a full CPMC~\citep{carreira2011cpmc} formulation:
unlike CPMC, we do not solve seeded constrained parametric min-cut problems.
Instead, we use the resulting weighted pixel graph as input to the
hedonic optimization described in Section~\ref{subsec:optimization}.

Note that for the qualitative visualizations in Fig.~\ref{fig:pipeline_figs}(b)-(c), 
the graphs were constructed using a downsampled version of the original image 
to ensure the nodes and edges remain discernible for the reader. 
However, all quantitative experiments and results reported in 
Section~\ref{sec:results} were conducted using the original 
pixel-level resolution of the Weizmann dataset without any 
pre-processing or scaling.

\section{Numerical Report}
\label{sec:report}

\begin{comment}
\begin{figure}[b]
\centering
\setlength{\tabcolsep}{3pt}
\begin{tabular}{cc}
\includegraphics[width=0.5\textwidth,height=0.18\textheight,keepaspectratio]{f1_vs_gamma_mean.png} &
\includegraphics[width=0.5\textwidth,height=0.18\textheight,keepaspectratio]{f1_hist_single_union.png} \\ 
(a) & (b) 
\end{tabular} \vspace{-0.15in}
\caption{
(a) Mean $\fsingle$ and $\funion$ as a function of $\gamma$
(induced by $\mathrm{density}(G)/900$).
(b) Global distributions of $\fsingle$ and $\funion$ over 100 images after selecting the best GT
per image.
}
\label{fig:results_main}
\end{figure}
\end{comment}

We evaluate the proposed diagnostic framework on the Weizmann single-object subset, focusing on how
the resolution parameter $\gamma$ shapes equilibrium structure in hedonic coalition formation and
how this structure is reflected by the complementary scores $\fsingle$ and $\funion$. Our goal is  to characterize equilibrium regimes and failure modes
of the mechanism.

\textbf{Protocol and Ground-Truth Selection.}
Each image has three human ground-truth (GT) masks. For each GT we compute $\fsingle$ and
$\funion$, and then select, per image, the GT that maximizes $\funion$. We found low sensitivity to the choice of ground truth, as reported in Appendix~\ref{app:init}.
%This choice is consistent with the single-object assumption and ensures that $\funion$ reflects the best achievable recoverability of the foreground from the partition.  
Unless otherwise stated, we use the density-normalized resolution
$\gamma = \mathrm{density}(G)/900$. 
Figure~\ref{fig:results_main}(a) reports the mean $\fsingle$ and $\funion$ over 100 images as a
function of $\gamma$, while Figure~\ref{fig:results_main}(b) shows their global distributions.

 \textbf{Regime Transitions.}
As shown in Figure~\ref{fig:results_main}(a), for $\gamma < 10^{-7}$,
$\fsingle$ and $\funion$ are close on average, indicating a \emph{cohesive regime} in which the
foreground typically appears as a single dominant coalition.
As $\gamma$ increases, the two curves separate: $\fsingle$ decreases markedly, while $\funion$
remains comparatively high over a broad interval.
This gap characterizes a \emph{fragmented but recoverable regime}, where the foreground is present
in the partition but distributed across multiple coalitions.
 For larger $\gamma$, the behavior becomes more heterogeneous across images:
$\funion$ often remains high, while $\fsingle$ becomes
non-monotone, showing a partial rebound followed by a mild decline.
This pattern is consistent with highly fragmented partitions in which the foreground can still be
recovered by aggregating multiple coalitions, even though it rarely emerges as a single dominant one. Operationally, the pair $(\fsingle,\funion)$ distinguishes three broad situations: cohesive success (both high), recoverable fragmentation (large gap), and intrinsic failure (both low).

\textbf{Average Performance. } 
Across all images, we obtain $\mathbb{E}[\funion]\approx 0.828$ (median $\approx 0.868$) and
$\mathbb{E}[\fsingle]\approx 0.488$, with an average gap
$\mathbb{E}[\funion-\fsingle]\approx 0.340$ (cf.  Figure~\ref{fig:results_main}(b)). This large systematic gap shows that many apparent
failures under a dominant-coalition criterion  arise
from the difficulty of recomposing fragmented equilibria into a single binary mask. Qualitative extremes further clarify this interpretation: Appendix~\ref{app:peak_decay} contrasts a peak case, where both projections closely match the object, with a decay case, where even recoverable union cannot recover a good foreground mask.

\textbf{Robustness to Initialization.}
We repeat the protocol with two extreme initializations: \emph{singleton} (each node starts alone)
and \emph{one-coalition} (all nodes start together), and observe negligible differences between
them (Appendix~\ref{app:init}). We also analyze fragmentation via the number of communities $K$:
$K$ increases monotonically with $\gamma$, and high $\fsingle$ occurs mainly at small $K$, whereas
high $\funion$ appears across a wide range of $K$. Additional diagnostics are provided in
Appendix~\ref{app:fragmentation}. These regimes also suggest distinct failure modes. A large $\funion-\fsingle$ gap indicates recoverable fragmentation, where the object is distributed across multiple coalitions. Cases where both scores are low are more consistent with intrinsic failure or background leakage, in which the partition itself does not support accurate figure--ground recovery even under recoverable recomposition.

\section{Fragmentation Diagnostics}
\label{app:fragmentation}

This appendix provides detailed diagnostics relating fragmentation level, resolution, and
performance. 
We begin by examining how the number of communities $K$ produced by the hedonic optimization
varies with the resolution parameter $\gamma$, as well as how overall $F_1$ scores depend on
$\gamma$.

Figure~\ref{fig:diagnostics_gamma}(a) shows $K$ versus $\gamma$. While the scatter exhibits
substantial dispersion,  there is a   positive upward trend: larger values of $\gamma$
are associated, on average, with larger numbers of communities, indicating increasingly
fragmented equilibrium partitions.

Figure~\ref{fig:diagnostics_gamma}(b) shows $F_1$ versus $\gamma$. The plot suggests that
$\funion$ tends to attain higher and more stable values across the considered range of
$\gamma$, whereas $\fsingle$ is more dispersed and harder to predict. In particular,
$\fsingle$ typically takes lower values for intermediate values of $\gamma$, consistent with
a fragmentation regime in which the foreground is split across multiple communities.

\begin{figure}[h]
\centering
\setlength{\tabcolsep}{6pt}
\begin{tabular}{cc}
\includegraphics[width=0.49\textwidth,height=0.25\textheight,keepaspectratio]{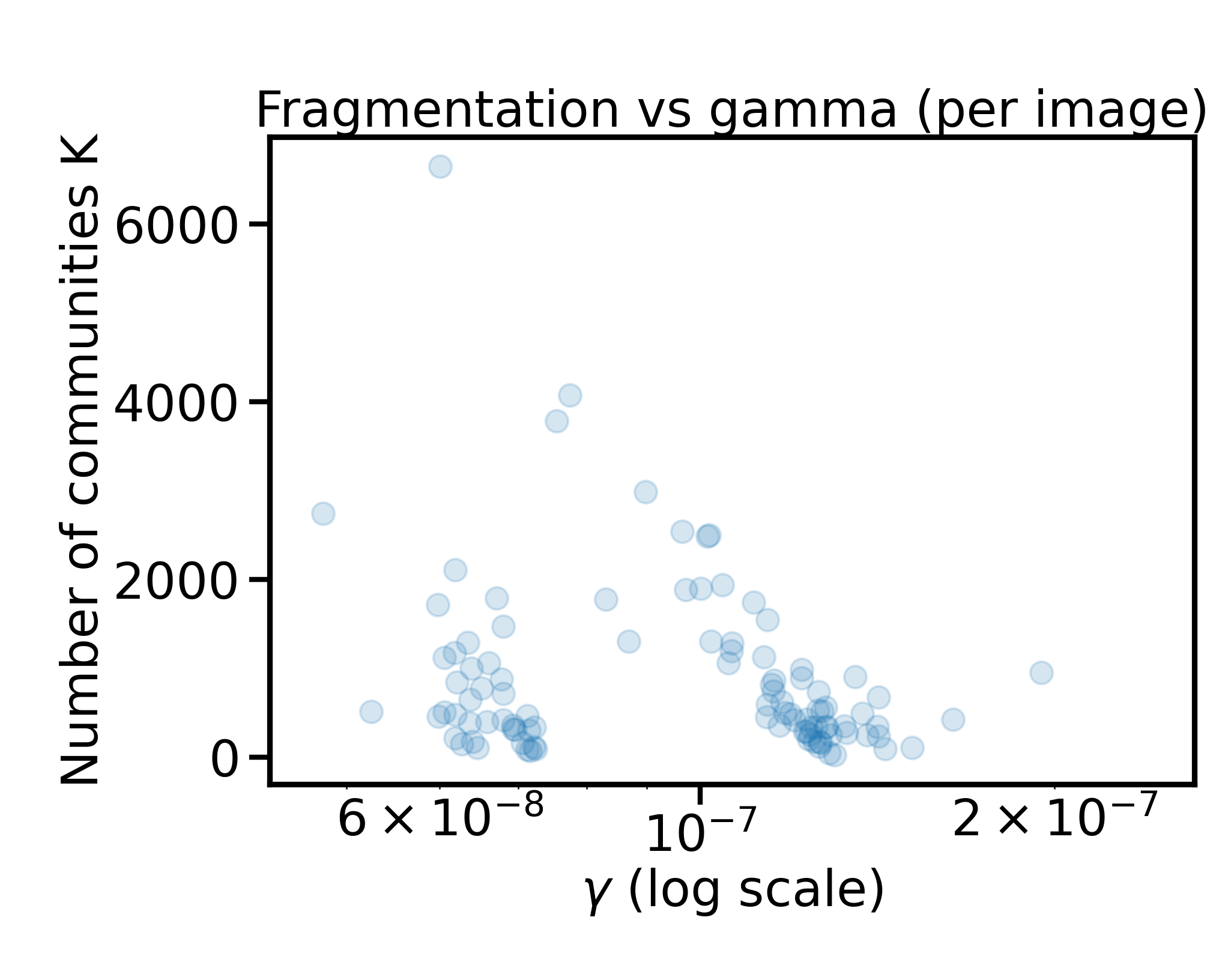} &
\includegraphics[width=0.52\textwidth,height=0.25\textheight,keepaspectratio]{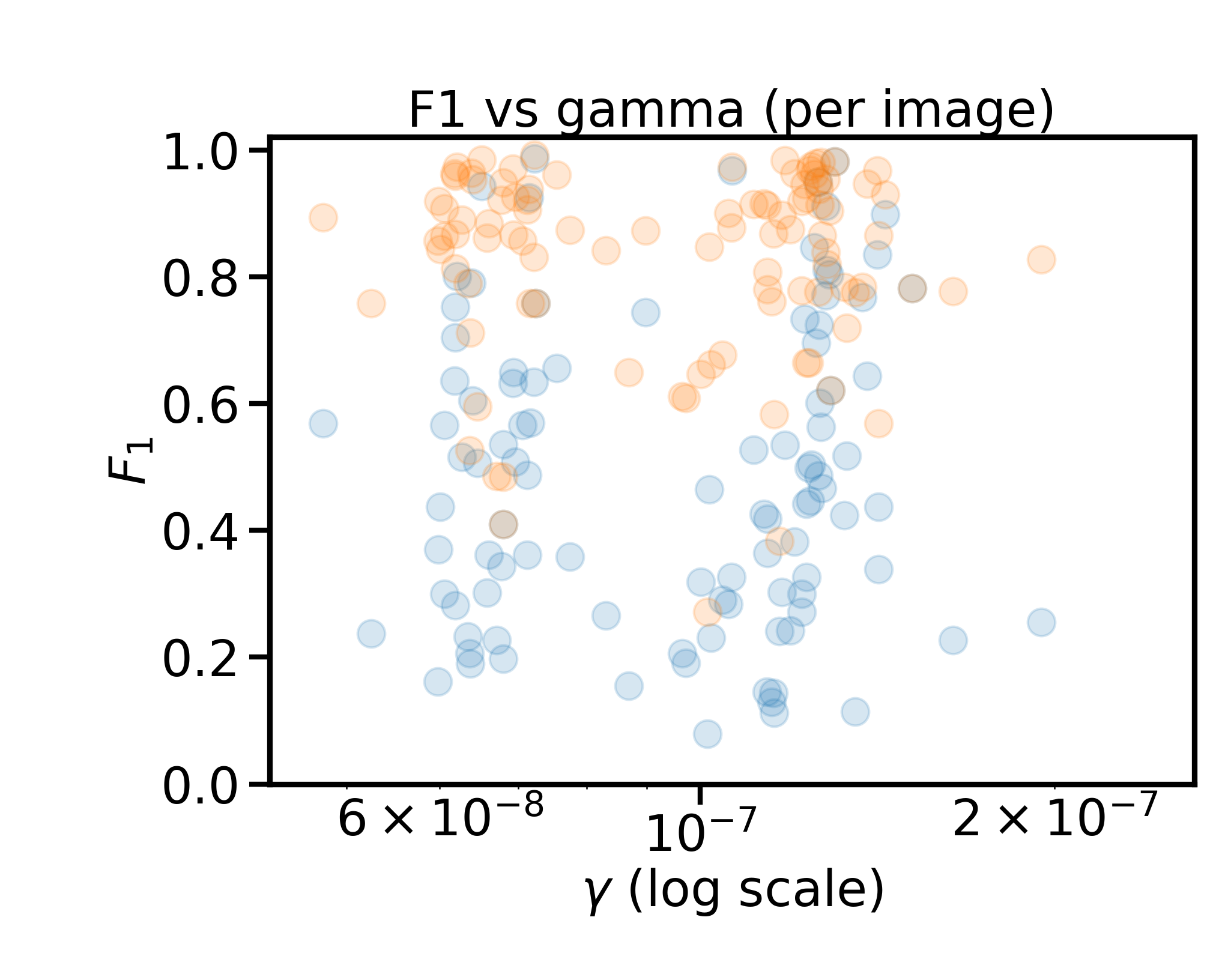} \\
(a) & (b)
\end{tabular}
\caption{Diagnostics as a function of the resolution parameter $\gamma$.
(a) Fragmentation $K$ vs.\ $\gamma$.
(b) $F_1$ vs.\ $\gamma$. Orange points correspond to $F_{1,\textrm{union}}$ and blue points to $F_{1,\textrm{single}}$.}
\label{fig:diagnostics_gamma}
\end{figure}

Next, we relate fragmentation directly to performance.
Figure~\ref{fig:f1_vs_K} shows $F_1$ versus the number of communities $K$. Values of $\fsingle$
exhibit substantial dispersion and are strongly concentrated at small $K$, suggesting that
dominant-coalition recovery typically requires low fragmentation and becomes unreliable as
fragmentation increases. In contrast, high $\funion$ values are observed across a wider range of $K$, indicating that the foreground can remain recoverable even when distributed
across many communities.

\begin{figure}[h!]
\centering
\includegraphics[width=0.6\textwidth,height=0.25\textheight,keepaspectratio]{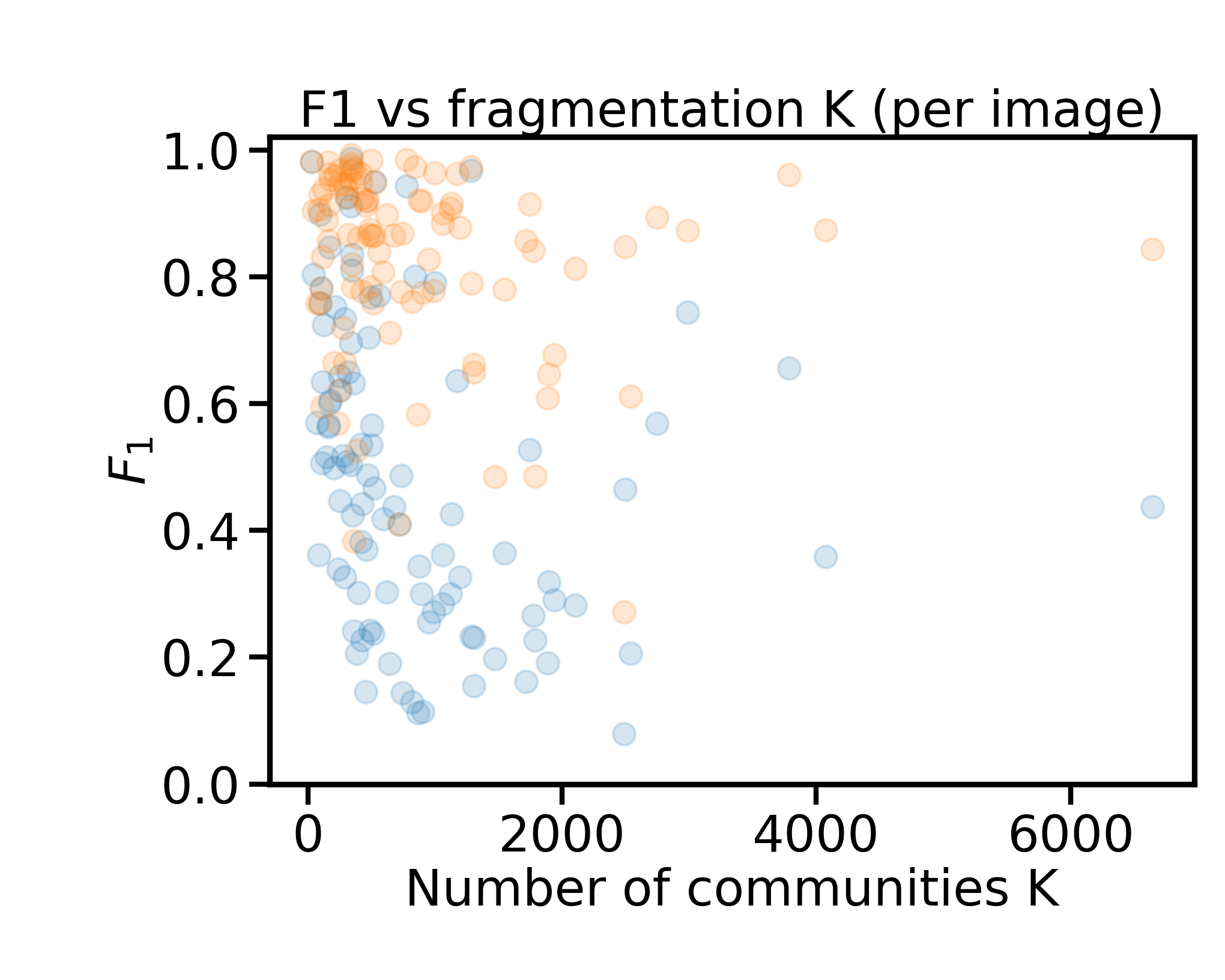}
\caption{$F_1$ vs \# of communities $K$. Orange and blue points correspond to $F_{1,\textrm{union}}$ and  $F_{1,\textrm{single}}$.} 
\label{fig:f1_vs_K}
\end{figure}

Finally, Figure~\ref{fig:K_hist} shows the histogram of $K$ over all images. The distribution
suggests that the density-normalized choice $\gamma=\mathrm{density}(G)/c$ places a substantial number of  instances
in an intermediate fragmentation regime, rather than at extreme values of very small or very
large $K$.

\begin{figure}[h!]
\centering
\includegraphics[width=0.6\textwidth,height=0.25\textheight,keepaspectratio]{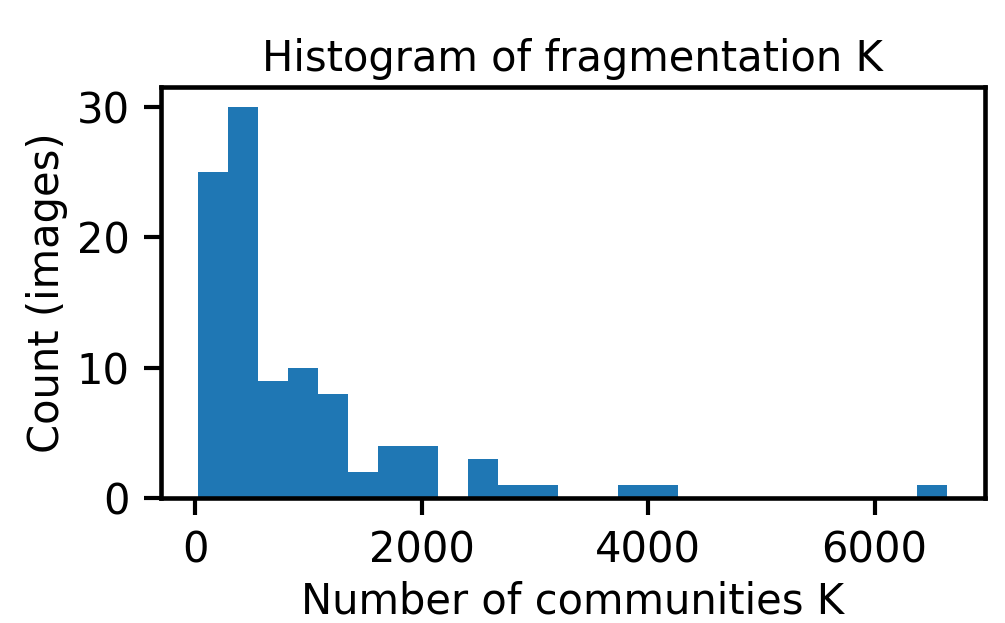}
\caption{Global distribution of the number of communities $K$ over 100 images.}
\label{fig:K_hist}
\end{figure}

% \clearpage
\section{Qualitative Extremes: Peak and Decay Cases}
\label{app:peak_decay}

To complement the aggregate trends reported in Section~\ref{sec:results}, we present two
qualitative extremes that help interpret how the scores $\fsingle$ and $\funion$ arise in
practice. We select one \emph{peak} case, corresponding to the image with the highest
$\funion$ in the dataset, and one \emph{decay} case, corresponding to the image with the
lowest $\funion$.

These examples clarify how the same multi-coalition mechanism can produce either highly
recoverable partitions or severe failure under fragmentation and/or background leakage.
In both cases, panels (a) and (b) show the original image and the selected ground-truth mask,
respectively; panel (c) shows the best single-community projection associated with $\fsingle$,
and panel (d) shows the recoverable union used to compute $\funion$.

\subsection{Peak case}
\label{subsec:peak}

Figure~\ref{fig:results_peak} shows a representative case in which the partition aligns very
well with the object. The best single coalition already captures most of the foreground,
yielding $\fsingle=0.9863$, and the recoverable union provides only a marginal improvement,
reaching $\funion=0.9922$. This is consistent with a cohesive regime in which the object
emerges almost entirely as a single dominant coalition.

\begin{figure}[h!]
\centering
\setlength{\tabcolsep}{2pt}
\renewcommand{\arraystretch}{0}
\begin{tabular}{cc}
\includegraphics[width=0.48\linewidth,height=0.145\textheight,keepaspectratio]{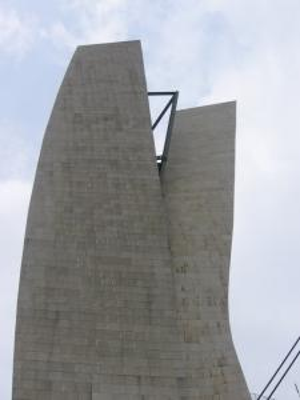} &
\includegraphics[width=0.48\linewidth,height=0.145\textheight,keepaspectratio]{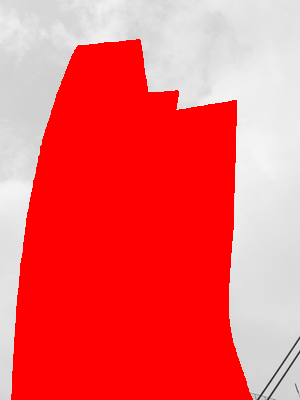} \\
\small (a) Original image &
\small (b) Selected ground truth \\
\includegraphics[width=0.48\linewidth,height=0.145\textheight,keepaspectratio]{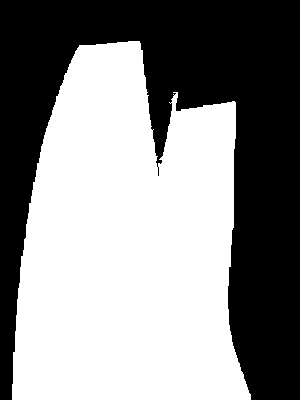} &
\includegraphics[width=0.48\linewidth,height=0.145\textheight,keepaspectratio]{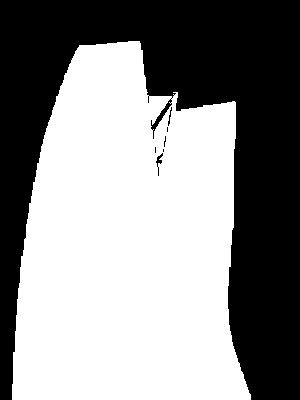} \\
\small (c) Best single coalition &
\small (d) Recoverable union
\end{tabular}
\caption{Peak example: the partition aligns closely with the selected ground truth.
The best single coalition already attains $\fsingle=0.9863$, while the recoverable union
slightly improves performance to $\funion=0.9922$.}
\label{fig:results_peak}
\end{figure}

\subsection{Decay case}
\label{subsec:decay}

Figure~\ref{fig:results_qual_decayed} shows the opposite extreme. Here the partition fails to
produce a coalition that matches the object well, giving $\fsingle=0.0370$. Even after
recoverable recomposition, the result remains poor, with $\funion=0.2714$. This case is
representative of intrinsic failure under severe fragmentation and/or figure--ground mixing:
the object is not cleanly represented in the partition, so even an optimistic oracle union
cannot recover a high-quality binary mask.

Taken together, these qualitative extremes reinforce the interpretation of the two-score
diagnostic. In peak cases, both $\fsingle$ and $\funion$ are high, indicating that the
foreground is well represented and largely cohesive in the partition. In decay cases, both
scores are low, indicating that the limitation is not merely downstream recomposition, but
rather the inability of the partition itself to represent the object cleanly under the chosen
graph construction and resolution regime.

\begin{figure}[h!]
\centering
\setlength{\tabcolsep}{2pt}
\renewcommand{\arraystretch}{0}
\begin{tabular}{cc}
\includegraphics[width=0.48\linewidth,height=0.145\textheight,keepaspectratio]{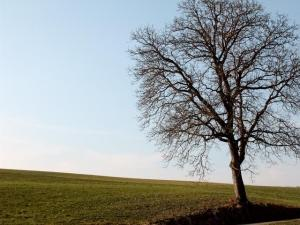} &
\includegraphics[width=0.48\linewidth,height=0.145\textheight,keepaspectratio]{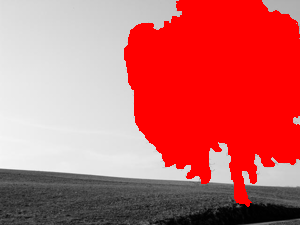} \\
\small (a) Original image &
\small (b) Selected ground truth \\
\includegraphics[width=0.48\linewidth,height=0.145\textheight,keepaspectratio]{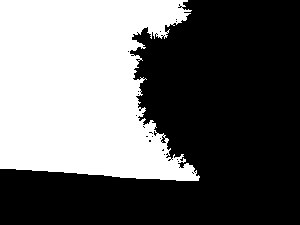} &
\includegraphics[width=0.48\linewidth,height=0.145\textheight,keepaspectratio]{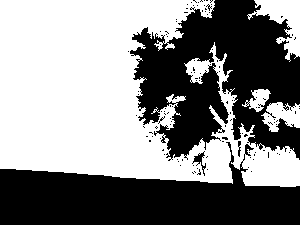} \\
\small (c) Best single coalition &
\small (d) Recoverable union
\end{tabular}
\caption{Decay example: the partition exhibits severe failure, with no single coalition
capturing the object well and only limited improvement under recoverable union.
The scores are $\fsingle=0.0370$ and $\funion=0.2714$.}
\label{fig:results_qual_decayed}
\end{figure}

\clearpage
\section{Sensitivity to Initialization and to Ground Truth Labels}
\label{app:init}

Figure~\ref{fig:results_violin_both} summarizes the $F_1$ distributions across all images for each ground truth under both initialization strategies.  Recall that the dataset provides three ground truth labels.  Each column of Figure~\ref{fig:results_violin_both}   corresponds to the use of a different label. Each plot reports the distribution over 100 images, with the top row corresponding to $\fsingle$ and the bottom row to $\funion$. Figure~\ref{fig:violin_singleton} shows the results obtained with singleton initialization  (each node starts alone), while Figure~\ref{fig:violin_onecoal} reports the one-coalition initialization (all nodes start together). We observe nearly identical distributions across both cases, indicating negligible sensitivity to initialization.

\begin{figure*}[h!]
\centering

\begin{subfigure}{0.70\textwidth}
  \centering
  \includegraphics[width=\textwidth]{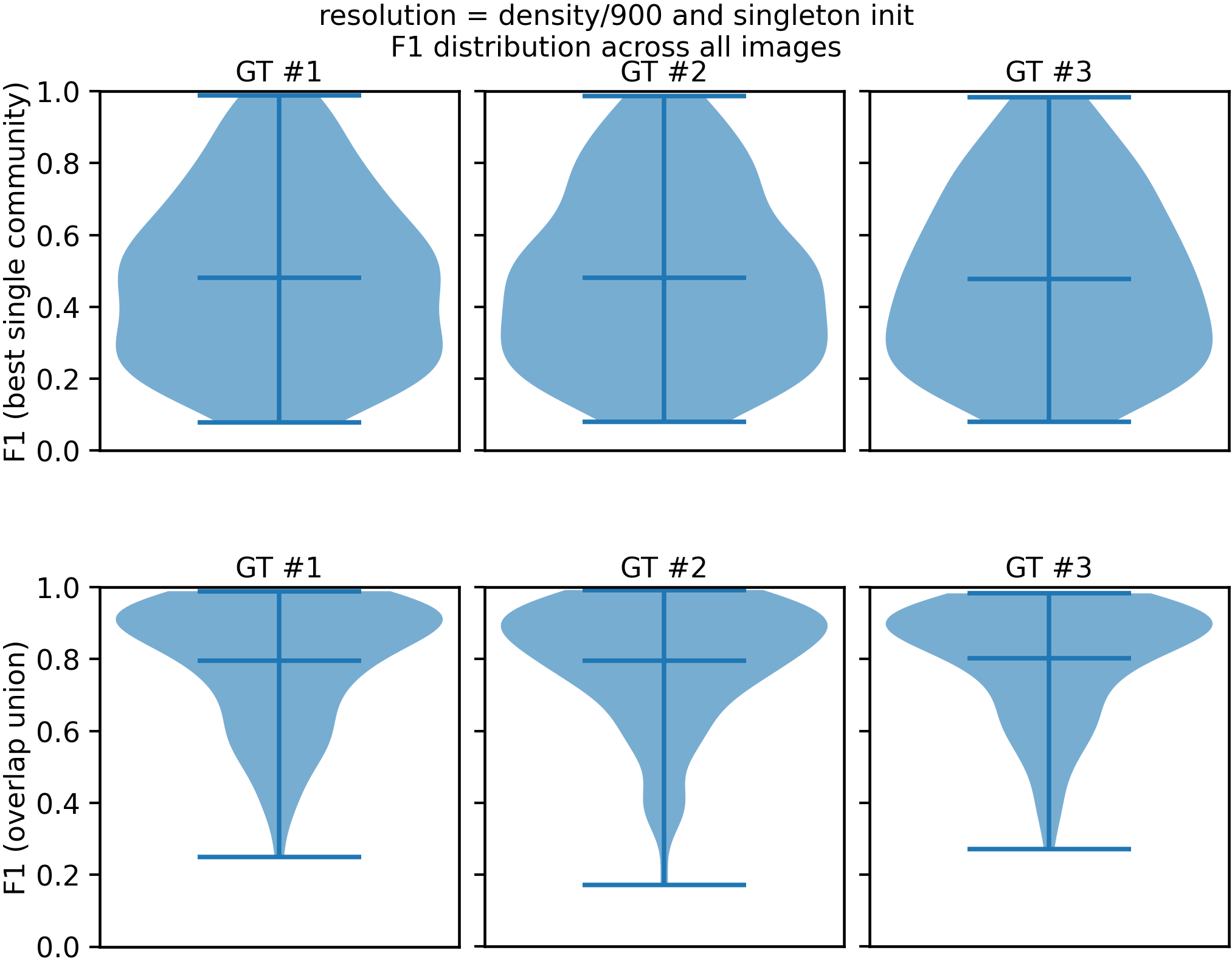}
  \caption{Singleton initialization.}
  \label{fig:violin_singleton}
\end{subfigure}

\vspace{4pt}

\begin{subfigure}{0.70\textwidth}
  \centering
  \includegraphics[width=\textwidth]{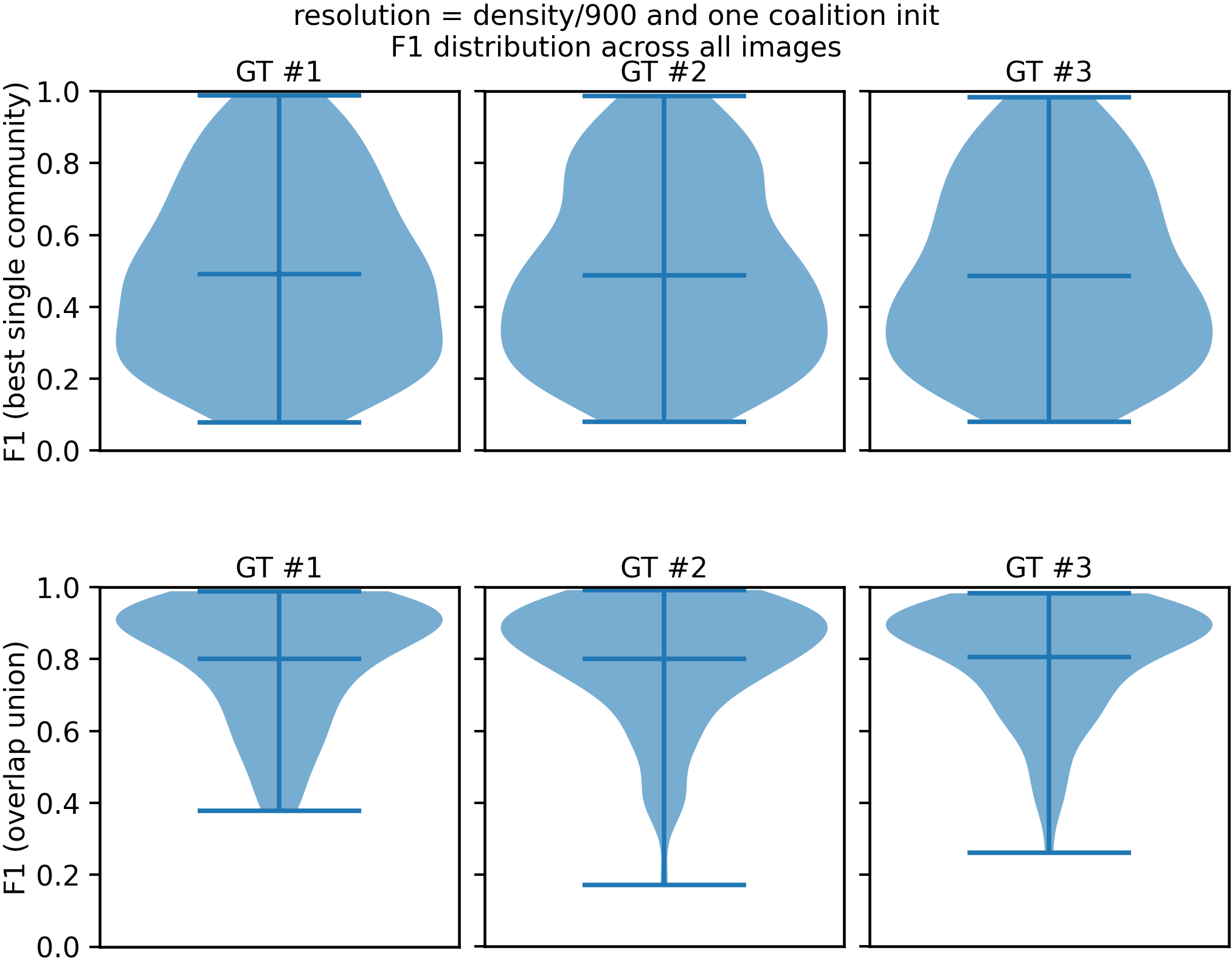}
  \caption{One-coalition initialization.}
  \label{fig:violin_onecoal}
\end{subfigure}

\caption{
Violin plots over 100 images with $\gamma=\mathrm{density}(G)/900$.
In each panel, the top row shows $\fsingle$ and the bottom row shows $\funion$.
}
\label{fig:results_violin_both}
\end{figure*}

% \clearpage
\section{Details of Binary Projections and $F_1$ Computation}
\label{app:f1}

This appendix provides a detailed mathematical specification of the binary projections
$\fsingle$ and $\funion$, as well as the underlying $F_1$ measure used throughout the paper.

\paragraph{\texorpdfstring{$\fsingle$}{F1-single} (dominant coalition).}
Let $\mathbf{L}\in\{1,\dots,K\}^{H\times W}$ be the label image. For each community label $k$,
define $\hat{Y}_k(p)=\mathbbm{1}[\mathbf{L}(p)=k]$, and set 
$\fsingle(\Pi)=\max_k F_1(\hat{Y}_k,Y)$.
This score measures whether the foreground object emerges as a \emph{single dominant coalition}.

\paragraph{\texorpdfstring{$\funion$}{F1-union} (recoverable union).}
Using the ground-truth mask \emph{only at evaluation time}, we construct
$\hat{Y}_S(p)=\bigvee_{k\in S}\mathbbm{1}[\mathbf{L}(p)=k]$ via a greedy forward pass that initializes
$S$ with the label attaining $\fsingle$, orders the remaining communities by their individual
$F_1(\hat{Y}_k,Y)$ scores, and then scans this list once, adding  communities while their  inclusion
increases the current $F_1$ (with a cap on merged labels). The resulting score
$\funion(\Pi)=F_1(\hat{Y}_S,Y)$ upper-bounds how well the foreground can be recovered from the
partition. %, regardless of whether it appears as a single coalition.

\subsection{Binary Masks and Pixel Sets}

Let the image domain be $\Omega=\{1,\dots,H\}\times\{1,\dots,W\}$.
The ground-truth foreground mask is
\[
Y(p) \in \{0,1\}, \qquad p \in \Omega,
\]
and the predicted community labeling is
\[
\mathbf{L}(p) \in \{1,\dots,K\}.
\]

For any binary mask $B:\Omega\to\{0,1\}$, define the corresponding foreground pixel set
\[
\mathcal{B}=\{p\in\Omega : B(p)=1\},
\qquad
\mathcal{Y}=\{p\in\Omega : Y(p)=1\}.
\]

\subsection{Precision, Recall, and $F_1$}

Given a predicted mask $B$, we define

\[
\mathrm{Precision}(B,Y)
=
\frac{|\mathcal{B}\cap \mathcal{Y}|}{|\mathcal{B}|},
\qquad
\mathrm{Recall}(B,Y)
=
\frac{|\mathcal{B}\cap \mathcal{Y}|}{|\mathcal{Y}|}.
\]

The $F_1$ score is the harmonic mean of precision and recall:

\[
F_1(B,Y)
=
\frac{2\,\mathrm{Precision}(B,Y)\,\mathrm{Recall}(B,Y)}
{\mathrm{Precision}(B,Y)+\mathrm{Recall}(B,Y)}
=
\frac{2|\mathcal{B}\cap \mathcal{Y}|}
{|\mathcal{B}| + |\mathcal{Y}|}.
\]

\subsection{$\fsingle$: Dominant-Coalition Projection}

For each community label $k$, we define the induced binary mask

\[
\hat{Y}_k(p)
=
\mathbbm{1}[\mathbf{L}(p)=k],
\qquad
\mathcal{C}_k=\{p : \mathbf{L}(p)=k\}.
\]

The dominant-coalition score is

\[
\fsingle(\Pi)
=
\max_{k\in\{1,\dots,K\}} F_1(\hat{Y}_k,Y)
=
\max_k
\frac{2|\mathcal{C}_k\cap \mathcal{Y}|}
{|\mathcal{C}_k| + |\mathcal{Y}|}.
\]

Thus, $\fsingle$ measures whether there exists a \emph{single} community whose pixels align well
with the foreground object.

\subsection{$\funion$: recoverable union}

For any subset of labels $S\subseteq\{1,\dots,K\}$, define the union mask

\[
\hat{Y}_S(p)
=
\bigvee_{k\in S}\mathbbm{1}[\mathbf{L}(p)=k],
\qquad
\mathcal{U}_S
=
\bigcup_{k\in S} \mathcal{C}_k.
\]

The corresponding $F_1$ score is

\[
F_1(\hat{Y}_S,Y)
=
\frac{2|\mathcal{U}_S\cap \mathcal{Y}|}
{|\mathcal{U}_S| + |\mathcal{Y}|}.
\]

We construct $S$ by a forward selection procedure that uses ground truth only at evaluation time:

\begin{enumerate}
\item Compute $F_1(\hat{Y}_k,Y)$ for all labels $k$ and sort labels in decreasing order of this score.
\item Initialize $S$ with the label attaining $F_{1,\textrm{single}}(\Pi)$.
\item Scan the sorted list once and add a label $k$ to $S$ while 
$F_1(\hat{Y}_{S\cup\{k\}},Y) > F_1(\hat{Y}_{S},Y)$.
\end{enumerate}

The resulting score is

\[
\funion(\Pi) = F_1(\hat{Y}_S,Y).
\]

%\subsection{Why $\funion$ is an Upper Bound}

$\funion$ approximates
\[
\max_{S\subseteq\{1,\dots,K\}} F_1(\hat{Y}_S,Y),
\]
which is the best achievable binary reconstruction from the given partition.
%Therefore, $\funion$ upper-bounds the recoverability of the foreground from the multi-coalition output.

\subsection{Greedy Forward Procedure for Computing $\funion$}

Algorithm~\ref{alg:greedy_union} summarizes the   greedy forward procedure used
throughout the paper to compute $\funion$. Starting from the best single community (the one
attaining $\fsingle$), the method scans the remaining communities in decreasing order of their
individual $F_1$ scores and adds a label only when its inclusion strictly improves the current
union score.

\begin{algorithm}[H]
\DontPrintSemicolon
\LinesNumbered
\SetKwInOut{Input}{Input}
\SetKwInOut{Output}{Output}

\Input{
Partition $\Pi=\{C_1,\dots,C_K\}$ (equivalently, label image $\mathbf{L}$)\;
Binary ground-truth mask $Y\in\{0,1\}^{H\times W}$\;
Optional cap $L_{\max}$ on the number of merged labels\;
}
\Output{
Selected label set $S$ and score $\funion(\Pi)=F_1(\hat{Y}_S,Y)$\;
}

Compute $F_1(\hat{Y}_k,Y)$ for all $k=1,\dots,K$\;
Let $k^\star \in \arg\max_k F_1(\hat{Y}_k,Y)$ and initialize $S \leftarrow \{k^\star\}$\;
Order the remaining labels $k\neq k^\star$ by decreasing $F_1(\hat{Y}_k,Y)$\;

\ForEach{label $k$ in the ordered list}{
    \If{$L_{\max}$ is specified and $|S| = L_{\max}$}{
        \textbf{break}\;
    }
    \If{$F_1(\hat{Y}_{S\cup\{k\}},Y) > F_1(\hat{Y}_S,Y)$}{
        $S \leftarrow S \cup \{k\}$\;
    }
}

\Return $S$ and $\funion(\Pi)=F_1(\hat{Y}_S,Y)$\;

\caption{Greedy forward union used to compute $\funion$.}
\label{alg:greedy_union}
\end{algorithm}

% \clearpage
\subsection{Alternative Recoverable-Union Variant}

For completeness, Algorithm~\ref{alg:overlap_union} presents a simple alternative  
union rule based on per-community overlap with the ground truth. Unlike the greedy forward
procedure used for $\funion$, this variant evaluates each community independently and selects all
labels whose individual overlap score exceeds a fixed threshold. We do not use this rule in the
main experiments; it is included only to illustrate that multiple oracle-style unions are
possible.

\begin{algorithm}[H]
\DontPrintSemicolon
\LinesNumbered
\SetKwInOut{Input}{Input}
\SetKwInOut{Output}{Output}

\Input{
Partition $\Pi=\{C_1,\dots,C_K\}$\;
Binary ground-truth mask $Y\in\{0,1\}^{H\times W}$\;
Overlap threshold $\tau \in [0,1]$\;
}
\Output{
Selected label set $S$ and union mask $\hat{Y}_S$\;
}

$S \leftarrow \emptyset$\;

\For{$k \in \{1,\dots,K\}$}{
    Compute the individual overlap score of $C_k$ with $Y$\;
    \If{$F_1(\hat{Y}_k,Y) > \tau$}{
        $S \leftarrow S \cup \{k\}$\;
    }
}

\Return $S$ and $\hat{Y}_S(p)=\bigvee_{k\in S}\mathbbm{1}[\mathbf{L}(p)=k]$\;

\caption{Threshold-based recoverable union as an alternative oracle variant.}
\label{alg:overlap_union}
\end{algorithm}

\subsection{Interpreting the Gap Between $\fsingle$ and $\funion$}

\begin{itemize}
\item Small gap: foreground appears largely as a single coalition.
\item Large gap: foreground is distributed across several communities (fragmentation).
\item Both small: intrinsic failure (object not represented well in partition).
\end{itemize}

Hence, the pair $(\fsingle,\funion)$ disentangles failures due to coalition structure from failures
due to downstream binary projection.

\subsection{Toy Example: Dominant Coalition vs.\ Recoverable Union}

Consider an image with $|\mathcal{Y}|=100$ foreground pixels and three communities
$C_1,C_2,C_3$ with the following \emph{spatial overlaps}:

\[
|\mathcal{C}_1|=40, \quad |\mathcal{C}_2|=35, \quad |\mathcal{C}_3|=50,
\]
\[
|\mathcal{C}_1\cap\mathcal{Y}|=30, \quad
|\mathcal{C}_2\cap\mathcal{Y}|=25, \quad
|\mathcal{C}_3\cap\mathcal{Y}|=10.
\]

\paragraph{Single-community scores.}
\[
F_1(C_1) = \frac{2\cdot 30}{40+100}=0.43,\qquad
F_1(C_2)=\frac{2\cdot25}{35+100}=0.37,\qquad
F_1(C_3)=\frac{2\cdot10}{50+100}=0.13.
\]

Thus,
\[
\fsingle = 0.43.
\]

\paragraph{Union of communities.}
Let $S=\{1,2\}$.
Then

\[
|\mathcal{U}_S|=75,
\qquad
|\mathcal{U}_S\cap\mathcal{Y}|=55.
\]

\[
F_1(\hat{Y}_S,Y)
=
\frac{2\cdot55}{75+100}
=0.63.
\]

Hence,
\[
\funion = 0.63 > \fsingle.
\]

This example illustrates a fragmented-but-recoverable situation: no single community captures the
object well, yet combining two communities yields good foreground reconstruction.

\subsection{Relation to Figure~\ref{fig:binary_projections}}

The behavior visualized in Figure~\ref{fig:binary_projections} corresponds to the label selection described in this section. In Figure~\ref{fig:binary_projections}(c), $\fsingle$ selects the single community with the highest individual $F_1$ score. 
 The $\funion$ mask is formed by the  greedy forward union described above, which aggregates communities whose addition improves the current $F_1$ score. This captures the entire object by aggregating the multiple fragments that individually correspond to parts of the foreground (e.g., $S=\{1,2\}$ in the toy example), without requiring an iterative optimization search.

\subsection{When the Recoverable-Union Can Be Suboptimal}

The recoverable-union procedure is a heuristic for approximating
\[
\max_{S \subseteq \{1,\dots,K\}} F_1(\hat{Y}_S, Y),
\]
which is combinatorial in $K$. As with many subset-selection methods, optimality is not
guaranteed.

A typical failure mode occurs when two communities are individually weak predictors of the
foreground but jointly complementary. For example, two communities may each cover disjoint halves
of the object while also containing substantial background. Individually, neither improves $F_1$
enough to be selected, but together they would yield a strong score.

Formally, the forward selection can fail when

\[
F_1(\hat{Y}_{\{i\}},Y) \le F_1(\hat{Y}_{S},Y)
\quad \text{and} \quad
F_1(\hat{Y}_{\{j\}},Y) \le F_1(\hat{Y}_{S},Y),
\]

but

\[
F_1(\hat{Y}_{S\cup\{i,j\}},Y) \gg F_1(\hat{Y}_{S},Y).
\]

Exact subset search would recover $\{i,j\}$, while forward selection would stop early.

In practice, this pathology is rare in our setting because foreground fragments tend to have nontrivial individual \emph{spatial overlap} with the object, making them attractive early additions.

\subsection{How Large Can the Gap $\funion - \fsingle$ Be?}

In the worst case, the gap between $\funion$ and $\fsingle$ can be arbitrarily large.

Consider an object split evenly across $M$ disjoint communities, each containing exactly
$\frac{1}{M}|\mathcal{Y}|$ foreground pixels and no background. Then, for each community $k$,

\[
F_1(\hat{Y}_{\{k\}},Y)
=
\frac{2\cdot \frac{1}{M}|\mathcal{Y}|}{\frac{1}{M}|\mathcal{Y}| + |\mathcal{Y}|}
=
\frac{2}{M+1}.
\]

Thus,

\[
\fsingle = \frac{2}{M+1}.
\]

If we union all $M$ communities, we recover the full object:

\[
F_1(\hat{Y}_{\{1,\dots,M\}},Y)=1,
\qquad
\funion = 1.
\]

Hence,

\[
\funion - \fsingle = 1 - \frac{2}{M+1},
\]

which approaches $1$ as $M \rightarrow \infty$.

This construction demonstrates that a very low $\fsingle$ does not necessarily indicate failure
of coalition formation: it may simply reflect extreme fragmentation of an otherwise perfectly
recoverable object.

In contrast, small gaps between $\fsingle$ and $\funion$ indicate that even multi-coalition
recomposition cannot recover the foreground, pointing to intrinsic failure modes such as
background leakage or poor boundary placement.

The pathological lower bound for $\fsingle$,  under perfect recoverability, is formalized through the following proposition:
\begin{proposition}[Arbitrarily small $\fsingle$ with $\funion=1$]
\label{prop:pathological-gap}
For any integer $M\ge 1$, there exists a partition $\Pi=\{C_1,\dots,C_M\}$ and a binary ground-truth
mask $Y$ such that
\[
\funion(\Pi)=1
\qquad \text{and} \qquad
\fsingle(\Pi)=\frac{2}{M+1}.
\]
Consequently,
\[
\funion(\Pi)-\fsingle(\Pi)=1-\frac{2}{M+1},
\]
which can be made arbitrarily close to $1$ by taking $M$ large.
\end{proposition}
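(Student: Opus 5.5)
The plan is to realize the construction sketched just before the statement. Fix $M\ge 1$ and take a domain $\Omega$ with $|\Omega|=Mn$ pixels for some $n\ge 1$; the simplest choice is $H=1$, $W=Mn$. Set $Y\equiv 1$ on $\Omega$, so that $\mathcal{Y}=\Omega$ and $|\mathcal{Y}|=Mn$. Partition $\Omega$ into $M$ disjoint blocks $\mathcal{C}_1,\dots,\mathcal{C}_M$ of $n$ consecutive pixels each, and let $\mathbf{L}(p)=k$ for $p\in\mathcal{C}_k$. Then every community lies entirely in the foreground and contains exactly $\frac1M|\mathcal{Y}|$ foreground pixels. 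If one wants the mask to have some background, the same argument works with extra background pixels, as long as they form no community; with $Y\equiv1$ this is not needed.

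For $\fsingle$, apply the formula $F_1=\frac{2|\mathcal{C}_k\cap\mathcal{Y}|}{|\mathcal{C}_k|+|\mathcal{Y}|}$ to each block. This gives $\frac{2n}{n+Mn}=\frac{2}{M+1}$ for every $k$, so the maximum over $k$ is also $\frac{2}{M+1}$.

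For $\funion$, I have to be careful, because $\funion$ is defined through Algorithm~\ref{alg:greedy_union} and not as the exact subset maximum. I will therefore trace the greedy procedure. Since all individual scores tie, take $S=\{k^\star\}$ and order the rest arbitrarily. When $S$ has $j$ blocks, its score is $\frac{2jn}{jn+Mn}=\frac{2j}{j+M}$, which is strictly increasing in $j$. So every scanned label strictly improves $F_1$ and gets added. Assuming no cap $L_{\max}$ is set (or $L_{\max}\ge M$), the scan ends with $S=\{1,\dots,M\}$. Then $\mathcal{U}_S=\mathcal{Y}$, which gives $F_1=\frac{2Mn}{2Mn}=1$. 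The main subtlety is this step: the statement is about the greedy $\funion$, so I need the monotonicity check above and the remark about the cap. Tie-breaking does not matter, since every order yields the same final $S$.

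The gap identity $1-\frac{2}{M+1}=\frac{M-1}{M+1}$ then follows immediately, and it tends to $1$ as $M\to\infty$. The case $M=1$ gives gap $0$, which is consistent with the formula.
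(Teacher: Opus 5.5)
Your proof is correct and uses the same construction as the paper: $M$ equal-size, foreground-only communities, each with $F_1=\frac{2}{M+1}$, whose union is $Y$. It is tighter than the paper's sketch in three respects. First, your choice $Y\equiv 1$ makes explicit what is needed for $M$ foreground-only blocks to actually partition $\Omega$. Your aside about background pixels that ``form no community'' does not fit the definition, since $\mathbf{L}$ labels every pixel, but you do not rely on it. Second, your trace of Algorithm~\ref{alg:greedy_union}, with strictly increasing scores $\frac{2j}{j+M}$, justifies $\funion=1$, which the paper only asserts from the fact that $\funion$ is union-based. Third, you correctly note that this requires no cap or $L_{\max}\ge M$.
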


\begin{proof}[Proof sketch]
Let $Y$ contain $|\mathcal{Y}|$ foreground pixels. Construct $M$ disjoint communities
$C_1,\dots,C_M$ such that each $C_k$ contains exactly $|\mathcal{Y}|/M$ foreground pixels and no
background pixels. Then for every $k$,
\[
\mathrm{TP}_k=\frac{|\mathcal{Y}|}{M},\qquad
\mathrm{FP}_k=0,\qquad
\mathrm{FN}_k=|\mathcal{Y}|-\frac{|\mathcal{Y}|}{M}.
\]
Substituting into the $F_1$ identity
$F_1= \frac{2\mathrm{TP}}{2\mathrm{TP}+\mathrm{FP}+\mathrm{FN}}$
gives $F_1(\hat{Y}_{\{k\}},Y)=\frac{2}{M+1}$, hence
$\fsingle(\Pi)=\frac{2}{M+1}$.
Moreover, the union $\hat{Y}_{\{1,\dots,M\}}$ equals $Y$, so $F_1(\hat{Y}_{\{1,\dots,M\}},Y)=1$.
Since $\funion$ is defined by a union-based projection, $\funion(\Pi)=1$.
\end{proof}

\end{document}